\title{Hunting for Discriminatory Proxies\\in Linear Regression Models}
\author{
	Samuel Yeom\\
	Carnegie Mellon University\\
	\texttt{syeom@cs.cmu.edu}\\
	\And
	Anupam Datta\\
	Carnegie Mellon University\\
	\texttt{danupam@cmu.edu}\\
	\And
	Matt Fredrikson\\
	Carnegie Mellon University\\
	\texttt{mfredrik@cs.cmu.edu}
}
\newif\ifarxiv\arxivtrue
\newif\ifproof\prooftrue
\newtheorem{definition}{Definition}
\newtheorem{theorem}{Theorem}
\newtheorem{lemma}[theorem]{Lemma}
\renewcommand{\ALG@name}{Problem} 
\newcommand{\Var}{\mathrm{Var}}
\newcommand{\Cov}{\mathrm{Cov}}
\newcommand{\Asc}{\mathrm{Asc}}
\newcommand{\Inf}{\mathrm{Infl}}
\newcommand{\E}{\mathop{\mathbb{E}}}
\newcommand{\Yhat}{\vstretch{0.9}{\hat{Y}}}
\newcommand{\X}{\mathcal{X}}
\newcommand{\balpha}{\bm{\alpha}}
\renewcommand{\v}[1]{\mathbf{#1}}
\newcommand{\textapprox}{\texttildelow}
\begin{document}
\maketitle


\begin{abstract}
A machine learning model may exhibit discrimination when used to make decisions involving people.
One potential cause for such outcomes is that the model uses a statistical \emph{proxy} for a protected demographic attribute.
In this paper we formulate a definition of \emph{proxy use} for the setting of linear regression and present algorithms for detecting proxies.
Our definition follows recent work on proxies in classification models, and characterizes a model's constituent behavior that: \emph{1)} correlates closely with a protected random variable, and \emph{2)} is causally influential in the overall behavior of the model.
We show that proxies in linear regression models can be efficiently identified by solving a second-order cone program, and further extend this result to account for situations where the use of a certain input variable is justified as a ``business necessity''.
Finally, we present empirical results on two law enforcement datasets that exhibit varying degrees of racial disparity in prediction outcomes, demonstrating that proxies shed useful light on the causes of discriminatory behavior in models.
\end{abstract}


\section{Introduction} \label{sec:introduction}
The use of machine learning in domains like insurance~\cite{marr2017how}, criminal justice~\cite{compas}, and child welfare~\cite{allegheny} raises concerns about fairness, as decisions based on model predictions may discriminate on the basis of demographic attributes like race and gender.
These concerns are driven by high-profile examples of models that appear to have discriminatory effect, ranging from gender bias in job advertisements~\cite{datta2015automated} to racial bias in same-day delivery services~\cite{ingold2016amazon} and predictive policing~\cite{angwin2016machine}.

Meanwhile, laws and regulations in various jurisdictions prohibit certain practices that have discriminatory effect, regardless of whether the discrimination is intentional.
For example, the U.S.\ has recognized the doctrine of \emph{disparate impact} since 1971, when the Supreme Court held in \emph{Griggs v.\ Duke Power Co.}~\cite{griggs1971} that the Duke Power Company had discriminated against its black employees by requiring a high-school diploma for promotion when the diploma had little to do with competence in the new job.
These regulations pose a challenge for machine learning models, which may give discriminatory predictions as an unintentional side effect of misconfiguration or biased training data.
Many competing definitions of disparate impact~\cite{angwin2016machine, compasrebuttal} have been proposed in efforts to address this challenge, but it has been shown that some of these definitions are impossible to satisfy simultaneously~\cite{chouldechova2017fair}.
Therefore, it is important to find a workable standard for detecting discriminatory behavior in models.

Much prior work~\cite{feldman2015certifying, zafar2017fairness-aistats} has focused on the four-fifths rule~\cite{eeoc} or variants thereof, which are relaxed versions of the \emph{demographic parity} requirement that different demographic groups should receive identical outcomes on average.
However, demographic parity does not necessarily make a model fair.
For example, consider an attempt to ``repair'' a racially discriminatory predictive policing model by arbitrarily lowering the risk scores of some members of the disadvantaged race until demographic parity is reached.
The resulting model is still unfair to individual members of the disadvantaged race that did not have their scores adjusted.
In fact, this is why the U.S.\ Supreme Court ruled that demographic parity is not a complete defense to claims of disparate impact~\cite{connecticut1982}.
In addition, simply enforcing demographic parity without regard for possible justifications for disparate impact may be prohibited on the grounds of intentional discrimination~\cite{ricci2009}.

Recent work on \emph{proxy use}~\cite{datta2017proxy} addresses these issues by considering the causal factors behind discriminatory behavior.
A proxy for a protected attribute is defined as a portion of the model that is both causally influential~\cite{datta2016algorithmic} on the model's output and statistically associated with the protected variable.
This means that, in the repair example above, the original discriminatory model is a proxy for a protected demographic attribute, indicating the presence of discriminatory behavior in the ``repaired'' model.
However, prior treatment of proxy use has been limited to classification models, so regression models remain out of reach of these techniques.

In this paper, we define a notion of proxy use (Section~\ref{sec:defproxyuse}) for linear regression models, and show how it can be used to inform considerations of fairness and discrimination.
While the previous notion of proxy use is prohibitively expensive to apply at scale to real-world models~\cite{datta2017proxy}, our definition admits a convex optimization procedure that leads to an efficient detection algorithm (Section~\ref{sec:findproxyuse}).
Because disparate impact is not always forbidden, we extend our definition to account for an \emph{exempt} input variable whose use for a particular problem is justified.
We show that slight modifications to our detection algorithm allow us to effectively ``ignore'' proxies based on the exempt variable (Section~\ref{sec:exemption}).

Finally, in Section~\ref{sec:evaluation} we evaluate our algorithm with two real-world predictive policing applications.
We find that the algorithm, despite taking little time to run, accurately identifies parts of the model that are the most problematic in terms of disparate impact.
Moreover, in one of the datasets, the strongest nonexempt proxy is significantly weaker than the strongest general proxy, suggesting that proxy use can sometimes be attributed to a single input variable.
In other words, the proxies identified by our approach effectively explain the cause of discriminatory model predictions, informing the consideration of whether the disparate impact is justified.

\ifproof \else Proofs of all theorems are given in the extended version of this paper~\cite{yeom2018hunting}. \fi


\subsection{Related Work} \label{sec:related}
We refer the reader to~\cite{barocas2016big} for a detailed discussion of discrimination in machine learning from a legal perspective.
One legal development of note is the adoption of the four-fifths rule by the U.S.\ Equal Employment Opportunities Commission in 1978~\cite{eeoc}.
The four-fifths ``rule'' is a guideline that compares the rates of favorable outcomes among different demographic groups, requiring that the ratio of these rates be no less than four-fifths.
This guideline motivated the work of Feldman et al.~\cite{feldman2015certifying}, who guarantee that no classifier will violate the four-fifths rule by removing the association between the input variables and the protected attribute.
Zafar et al.~\cite{zafar2017fairness-aistats} use convex optimization to find linear models that are both accurate and fair, but their fairness definition, unlike ours, is derived from the four-fifths rule.
We show in Section~\ref{sec:demparity} that proxy use is a stronger notion of fairness than \emph{demographic parity}, of which the four-fifths rule is a relaxation.

Other notions of fairness have been proposed as well.
Dwork et al.~\cite{dwork2012fairness} argue that demographic parity is insufficient as a fairness constraint, and instead define \emph{individual fairness}, which requires that similar individuals have similar outcomes.
While individual fairness is important, it is not well-suited for characterizing disparate impact, which inherently involves comparing different demographic groups to each other.
Hardt et al.~\cite{hardt2016equality} propose a notion of group fairness called \emph{equalized odds}.
Notably, equalized odds does not require demographic parity, i.e., groups can have unequal outcomes as long as the response variable is also unequally distributed. 
For example, in the context of predictive policing, it would be acceptable to categorize members of a certain racial group as a higher risk on average, provided that they are in fact more likely to reoffend.
This is consistent with the current legal standard, wherein disparate impact can be justified if there is an acceptable reason.
However, some have observed that the response variable could be tainted by past discrimination~\cite[Section I.B.1]{barocas2016big}, in which case equalized odds may end up perpetuating the discrimination.

Our treatment of exempt input variables is similar to that of resolving variables by Kilbertus et al.~\cite{kilbertus2017avoiding} in their work on causal analysis of proxy use and discrimination.
A key difference is that they assume a causal model and only consider causal relationships between the protected attribute and the output of the model, whereas we view any association with the protected attribute as suspect.
Our notion of proxy use extends that of Datta et al.~\cite{datta2017proxy, datta2017use}, who take into consideration both \emph{association} and \emph{influence}.
An alternative measure of proxy strength has been proposed by Adler et al.~\cite{adler2018auditing}, who define a single real-valued metric called \emph{indirect influence}.
As we show in the rest of this paper, the two-metric-based approach of Datta et al.\ leads to an efficient proxy detection algorithm.


\section{Proxy Use} \label{sec:defproxyuse}
In this section we present a definition of proxy use that is suited to linear regression models.
We first review the original definition of Datta et al.~\cite{datta2017proxy} for classification models and then show how to modify this definition to get one that is applicable to the setting of linear regression.

\subsection{Setting}

We work in the standard machine learning setting, where a model is given several inputs that correspond to a data point.
Throughout this paper, we will use $\X = (X_1, \ldots, X_n)$ to denote these inputs, where $X_1, \ldots, X_n$ are random variables.
We consider a linear regression model $\Yhat = \beta_1 X_1 + \cdots + \beta_n X_n$, where $\beta_i$ represents the coefficient for the input variable $X_i$.
We will abuse notation by using $\Yhat$ to represent either the model or its output.

In the case where each data point represents a person, care must be taken to avoid disparate impact on the basis of a protected demographic attribute, such as race or gender.
We will denote such protected attribute by the random variable $Z$.
In practice, $Z$ is usually binary (i.e., $Z \in \{0, 1\}$), but our results are general and apply to arbitrary numerical random variables.

\subsection{Proxy Use in Prior Work}

Datta et al.~\cite{datta2017proxy} define proxy use of a random variable $Z$ as the presence of an intermediate computation in a program that is both statistically \emph{associated} with $Z$ and causally \emph{influential} on the final output of the program.
Instantiating this definition to a particular setting therefore entails specifying an appropriate notion of ``intermediate computation'', a statistical association measure, and a causal influence measure.

Datta et al.\ identify intermediate computations in terms of syntactic decompositions into \emph{subprograms} $P$, $\Yhat'$ such that $\Yhat(\X) \equiv \Yhat'(\X, P(\X))$.
Then the association between $P$ and $Z$ is given by an appropriate measure such as mutual information, and the influence of $P$ on $\Yhat$ is defined as shown in Equation~\ref{eqn:dattainfluence}, where $\X$ and $\X'$ are drawn independently from the population distribution.
\begin{equation} \label{eqn:dattainfluence}
\Inf_{\Yhat}(P) = \Pr_{\X, \X'}[\Yhat(\X) \neq \Yhat'(\X, P(\X'))],
\end{equation}
Intuitively, influence is characterized by the likelihood that an independent change in the value of $P$ will cause a change in $\Yhat$.
This makes sense for classification models because a change in the model's output corresponds to a change in the predicted class of a point, as reflected by the use of 0-1 loss in that setting.
On the other hand, regression models have real-valued outputs, so the square loss is more appropriate for these models.
Therefore, we are motivated to transform Equation~\ref{eqn:dattainfluence}, which is simply the expected 0-1 loss between $\Yhat(\X)$ and $\Yhat'(\X, P(\X'))$, into Equation~\ref{eqn:linearinfluence}, which is the expected square loss between these two quantities.
\begin{equation} \label{eqn:linearinfluence}
\E_{\X, \X'}[(\Yhat(\X) - \Yhat'(\X, P(\X')))^2]
\end{equation}
Before we can reason about the suitability of this measure, we must first define an appropriate notion of intermediate computation for linear models.

\subsection{Linear components}
The notion of subprogram used for discrete models~\cite{datta2017proxy} is not well-suited to linear regression.
To see why, consider the model $\Yhat = \beta_1 X_1 + \beta_2 X_2 + \beta_3 X_3$.
Suppose that this is computed using the grouping $(\beta_1 X_1 + \beta_2 X_2) + \beta_3 X_3$ and that the definition of subprogram honors this ordering.
Then, $\beta_1 X_1 + \beta_2 X_2$ would be a subprogram, but $\beta_1 X_1 + \beta_3 X_3$ would not be even though $\Yhat$ could have been computed equivalently as $(\beta_1 X_1 + \beta_3 X_3) + \beta_2 X_2$.
We might attempt to address this by allowing any subset of the terms used in the model to define a subprogram, thus capturing the commutativity and associativity of addition.
However, this definition still excludes expressions such as $\beta_1 X_1 + 0.5 \beta_3 X_3$, which may be a stronger proxy than either $\beta_1 X_1$ or $\beta_1 X_1 + \beta_3 X_3$.
To include such expressions, we present Definition~\ref{def:component} as the notion of subprogram that we use to define proxy use in the setting of linear regression.

\begin{definition}[Component] \label{def:component}
	Let $\Yhat = \beta_1 X_1 + \cdots + \beta_n X_n$ be a linear regression model.
	A random variable $P$ is a \emph{component} of $\Yhat$ if and only if there exist $\alpha_1, \ldots, \alpha_n \in [0, 1]$ such that $ P = \alpha_1 \beta_1 X_1 + \cdots + \alpha_n \beta_n X_n$.
\end{definition}

\subsection{Linear association and influence}
Having defined a component as the equivalent of a subprogram in a linear regression model, we now formalize the association and influence conditions given by Datta et al.~\cite{datta2017proxy}.

\paragraph{Association.} A linear model only uses linear relationships between variables, so our association measure only captures linear relationships.
In particular, we use the Pearson correlation coefficient, and we square it so that a higher association measure always represents a stronger proxy.

\begin{definition}[Association] \label{def:association}
	The \emph{association} of two nonconstant random variables $P$ and $Z$ is defined as $\Asc(P, Z) = \frac{\Cov(P, Z)^2}{\Var(P) \Var(Z)}$.
\end{definition}

Note that $\Asc(P, Z) \in [0, 1]$, with 0 representing no linear correlation and 1 representing a fully linear relationship.

\paragraph{Influence.} To formalize influence, we continue from where we left off with Equation~\ref{eqn:linearinfluence}.
Definition~\ref{def:component} gives us $\Yhat(\X) = \sum_{i=1}^n \beta_i X_i$ and $\Yhat'(\X, P(\X')) = \sum_{i=1}^n (1 - \alpha_i) \beta_i X_i + \alpha_i \beta_i X_i'$.
Substituting these into Equation~\ref{eqn:linearinfluence} gives
\[ \E_{\X, \X'}[(\Yhat(\X) - \Yhat'(\X, P(\X')))^2] = \E_{\X, \X'}[\textstyle (\sum_{i=1}^n \alpha_i \beta_i X_i - \alpha_i \beta_i X_i')^2] = \Var(P(\X) - P(\X')), \]
which is proportional to $\Var(P(\X))$ since $\X$ and $\X'$ are i.i.d.
Definition~\ref{def:influence} captures this reasoning, normalizing the variance so that $\Inf_{\Yhat}(P) = 1$ when $P = \Yhat$ (i.e., $\alpha_1 = \cdots = \alpha_n = 1$).
In \ifarxiv Appendix~\ref{sec:justification}, \else the extended version of this paper~\cite{yeom2018hunting}, \fi we also show that variance is the unique influence measure (up to a constant factor) satisfying some natural axioms that we call nonnegativity, nonconstant positivity, and zero-covariance additivity.

\begin{definition}[Influence] \label{def:influence}
	Let $P$ be a component of a linear regression model $\Yhat$. The \emph{influence} of $P$ is defined as $\Inf_{\Yhat}(P) = \frac{\Var(P)}{\Var(\Yhat)}$.
\end{definition}

When it is obvious from the context, the subscript $\Yhat$ may be omitted.
Note that influence can exceed 1 because the inputs to a model can cancel each other out, leaving the final model less variable than some of its components.

Finally, the definition of proxy use for linear models is given in Definition~\ref{def:proxyuse}.

\begin{definition}[$(\epsilon,\delta)$-Proxy Use] \label{def:proxyuse}
	Let $\epsilon, \delta \in (0, 1]$.
	A model $\Yhat = \beta_1 X_1 + \cdots + \beta_n X_n$ has \emph{$(\epsilon,\delta)$-proxy use} of $Z$ if there exists a component $P$ such that $\Asc(P,Z) \ge \epsilon$ and $\Inf_{\Yhat}(P) \ge \delta$.
\end{definition}


\subsection{Connection to Demographic Parity} \label{sec:demparity}

We now discuss the relationship between proxy use and demographic parity, and argue that proxy use is a stronger definition that provides more useful information than demographic parity.
For binary classification models with two demographic groups, demographic parity is defined by the equation $\Pr[\Yhat = 1 | Z = 0] = \Pr[\Yhat = 1 | Z = 1]$, i.e., two demographic groups must have the same rates of favorable outcomes.
We adapt this notion to regression models by replacing the constraint on the positive classification outcome with the expectation of the response, as shown in Definition~\ref{def:demparity}.

\begin{definition}[Demographic Parity, Regression] \label{def:demparity}
	Let $\Yhat$ be a regression model, and let $Z$ be a binary random variable. $\Yhat$ satisfies \emph{demographic parity} if $\E[\Yhat | Z = 0] = \E[\Yhat | Z = 1]$.
\end{definition}

Equation~\ref{eqn:demparity} shows that our association measure is related to demographic parity in regression models.
\begin{equation} \label{eqn:demparity}
\Asc(\Yhat, Z) = \frac{\Cov(\Yhat, Z)^2}{\Var(\Yhat) \Var(Z)} = (\E[\Yhat | Z = 0] - \E[\Yhat | Z = 1])^2 \cdot \frac{\Var(Z)}{\Var(\Yhat)},
\end{equation}
In particular, if $\Yhat$ does not satisfy demographic parity, then $\Asc(\Yhat, Z) > 0$, so $\Yhat$ is an $(\epsilon, 1)$-proxy for some $\epsilon > 0$.
This means that our proxy use framework is broad enough to detect any violation of demographic parity.
On the other hand, the ``repair'' example in Section~\ref{sec:introduction} shows that demographic parity does not preclude the presence of proxies.
Therefore, proxy use is a strictly stronger notion of fairness than demographic parity.

Moreover, instances of proxy use can inform the discussion about a model that exhibits demographic disparity.
When a proxy is identified, it may explain the cause of the disparity and can help decide whether the behavior is justified based on the set of variables used by the proxy.
We elaborate on this idea in Section~\ref{sec:exemption}, designating a certain input variable as always permissible to use.


\section{Finding Proxy Use} \label{sec:findproxyuse}
\begin{figure*}
	\centering
	\begin{subfigure}[t]{0.45\textwidth}
		\centering
		\includegraphics[width=0.8\columnwidth]{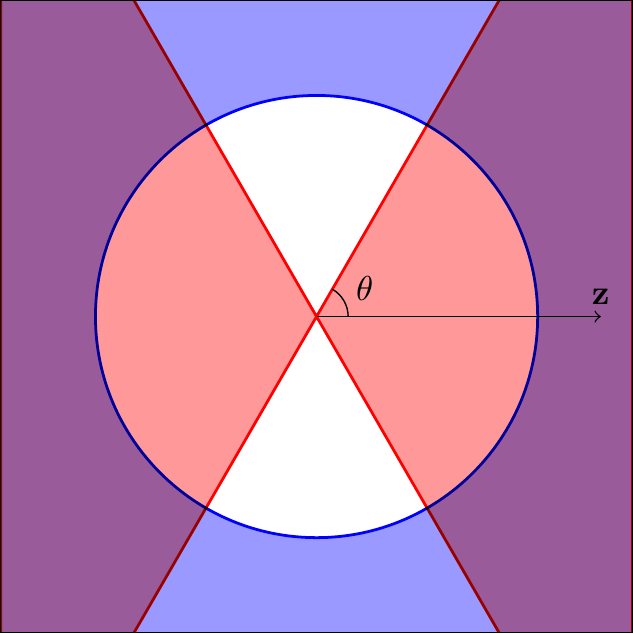}
		\subcaption{
			$\v z$ is a vector representation of the protected attribute $Z$, and components of the model can also be represented as vectors.
			If a component is inside the red double cone, it exceeds the association threshold $\epsilon$, where the angle $\theta$ is set such that $\epsilon = \cos^2 \theta$.
			The cone on the right side corresponds to positive correlation with $Z$, and the left cone negative correlation.
			Components in the blue shaded area exceed some influence threshold $\delta$.
			If any component exceeds both the association and the influence thresholds, it is a proxy and may be disallowed.
		}
		\label{fig:vectorconditions}
	\end{subfigure}
	\quad
	\begin{subfigure}[t]{0.45\textwidth}
		\centering
		\includegraphics[width=0.8\columnwidth]{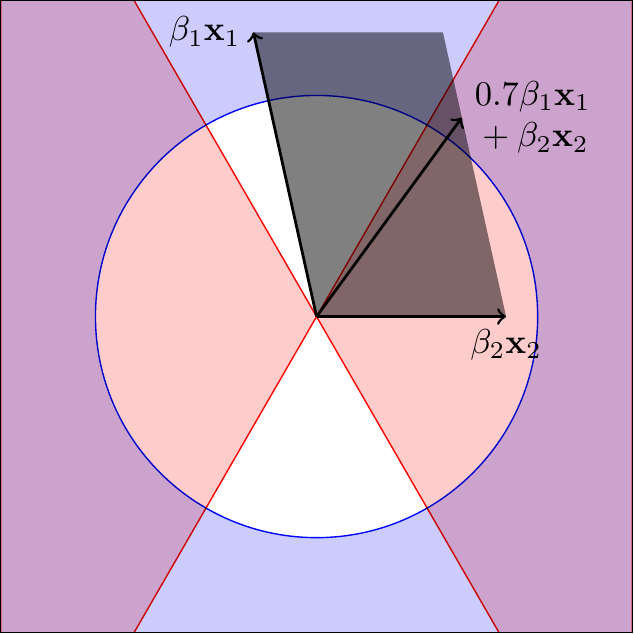}
		\subcaption{
			$\v x_1$ and $\v x_2$ are vector representations of $X_1$ and $X_2$, which are inputs to the model $\Yhat = \beta_1 X_1 + \beta_2 X_2$.
			The gray shaded area indicates the space of all possible components of the model.
			$\beta_1 X_1$ is a component, but it is not a proxy because it does not have strong enough association with $Z$.
			Although $\beta_2 X_2$ is strongly associated with $Z$, it is not influential enough to be a proxy.
			On the other hand, $0.7 \beta_1 X_1 + \beta_2 X_2$ is a component that exceeds both the association and the influence thresholds, so it is a proxy and may be disallowed.
		}
		\label{fig:vectorcomponents}
	\end{subfigure}
	\caption{
		Illustration of proxy use with the vector interpretation of random variables.
		In the above examples, all vectors lie in $\mathbb{R}^2$ for ease of depiction.
		In general, the vectors $\v z, \v x_1, \ldots, \v x_n$ can span $\mathbb{R}^{n+1}$.
	}
	\label{fig:vectors}
\end{figure*}

In this section, we present our proxy detection algorithms, which take advantage of properties specific to linear regression to quickly identify components of interest.
We prove that we can use an exact optimization problem (Problem~\ref{alg:optexact}) to either identify a proxy if one exists, or definitively conclude that there is no proxy.
However, because this problem is not convex and in some cases may be intractable, we also present an approximate version of the problem (Problem~\ref{alg:optapprox}) that sacrifices some precision.
The approximate algorithm can still be used to conclude that a model does not have any proxies, but it may return false positives.
In Section~\ref{sec:evaluation}, we evaluate how these algorithms perform on real-world data.

Because the only operations that we perform on random variables are addition and scalar multiplication, we can safely treat the random variables as vectors in a vector space.
In addition, covariance is an inner product in this vector space.
As a result, it is helpful to think of random variables $Z, X_1, \ldots, X_n$ as vectors $\v z, \v x_1, \ldots, \v x_n \in \mathbb{R}^{n+1}$, with covariance as dot product.
Under this interpretation, influence is characterized by $\Inf_{\Yhat}(P) \propto \Var(P) = \Cov(P, P) = \v p \cdot \v p = \|\v p\|^2$, where $\|{\cdot}\|$ denotes the $\ell^2$-norm, and association is shown in Equation~\ref{eqn:vectorassociation}, where $\theta$ is the angle between the two vectors $\v p$ and $\v z$.
\begin{equation} \label{eqn:vectorassociation}
\Asc(P, Z) = \frac{\Cov(P, Z)^2}{\Var(P) \Var(Z)} = \left(\frac{\v p \cdot \v z}{\|\v p\| \|\v z\|}\right)^2 = \cos^2 \theta,
\end{equation}
This abstraction is illustrated in more detail in Figure~\ref{fig:vectors}.

To find coordinates for the vectors, we consider the covariance matrix $[\Cov(X_i, X_j)]_{i,j \in \{0, \ldots, n\}}$, where $Z = X_0$ for notational convenience.
If we can write this covariance matrix as $A^T A$ for some $(n{+}1) \times (n{+}1)$ matrix $A$, then each entry in the covariance matrix is the dot product of two (not necessarily distinct) columns of $A$.
In other words, the mapping from the random variables $Z, X_1, \ldots, X_n$ to the columns of $A$ preserves the inner product relationship.
Now it remains to decompose the covariance matrix into the form $A^T A$.
Since the covariance matrix is guaranteed to be positive semidefinite, two of the possible decompositions are the Cholesky decomposition and the matrix square root.

Our proxy detection algorithms use as subroutines the optimization problems that are formally stated in Figure~\ref{fig:opts}.
We first motivate the exact optimization problem (Problem~\ref{alg:optexact}) and show how the solutions to these problems can be used to determine whether the model contains a proxy.
Then, we present the approximate optimization problem (Problem~\ref{alg:optapprox}), which sacrifices exactness for efficient solvability.

Let $A'$ be the $(n{+}1) \times n$ matrix $\begin{bmatrix} \beta_1 \v x_1 & \ldots & \beta_n \v x_n \end{bmatrix}$.
The constraint $0 \preceq \balpha \preceq 1$ restricts the solutions to be inside the space of all components, represented by the gray shaded area in Figure~\ref{fig:vectorcomponents}.
Moreover, when $s \in \{-1, 1\}$, the constraint $\|A' \balpha\| \le s \cdot (\v z^T A' \balpha) / (\sqrt{\epsilon} \|\v z\|)$ describes one of the red cones in Figure~\ref{fig:vectorconditions}, which together represent the association constraint.
Subject to these constraints, we maximize the influence, which is proportional to $\|A' \balpha\|^2$.
Theorem~\ref{thm:optexact} shows that this technique is sufficient to determine whether a model contains a proxy.

\begin{figure*}
	\centering
	\begin{minipage}{0.45\textwidth}
		\begin{algorithm}[H]
			\caption{Exact optimization} \label{alg:optexact}
			\vspace*{-1ex}
			\begin{align*}
				\text{min} &\quad {-}\|A' \balpha\|^2 \\[-1em]
				\text{s.t.} &\quad 0 \preceq \balpha \preceq 1
				\text{ and } \|A' \balpha\| \le s \cdot \frac{\v z^T A' \balpha}{\sqrt{\epsilon} \|\v z\|}
			\end{align*}
			\vspace*{-1ex}
		\end{algorithm}
	\end{minipage}
	\quad
	\begin{minipage}{0.45\textwidth}
		\begin{algorithm}[H]
			\caption{Approximate optimization} \label{alg:optapprox}
			\vspace*{-1ex}
			\begin{align*}
			\text{min} &\quad {-}\v c^T \balpha \\[-1em]
			\text{s.t.} &\quad 0 \preceq \balpha \preceq 1
			\text{ and } \|A' \balpha\| \le s \cdot \frac{\v z^T A' \balpha}{\sqrt{\epsilon} \|\v z\|}
			\end{align*}
			\vspace*{-1ex}
		\end{algorithm}
	\end{minipage}
	\caption{
		Optimization problems used to find proxies in linear regression models.
		$A'$ is the $(n{+}1) \times n$ matrix $\protect\begin{bmatrix} \beta_1 \v x_1 & \ldots & \beta_n \v x_n \protect\end{bmatrix}$, and we optimize over $\balpha$, which is an $n$-dimensional vector of the alpha-coefficients used in Definition~\ref{def:component}.
		$\epsilon$ is the association threshold, and $\v c$ is the $n$-dimensional vector that satisfies $c_i = \|\beta_i \v x_i\|$.
	}
	\label{fig:opts}
\end{figure*}

\begin{theorem} \label{thm:optexact}
	Let $P$ denote the component defined by the alpha-coefficients $\balpha$.
	The linear regression model $\Yhat = \beta_1 X_1 + \cdots + \beta_n X_n$ contains a proxy if and only if there exists a solution to Problem~\ref{alg:optexact} with $s \in \{-1, 1\}$ such that $\Inf_{\Yhat}(P) \ge \delta$.
\end{theorem}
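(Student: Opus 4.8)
The plan is to prove both directions at once by identifying the feasible region of Problem~\ref{alg:optexact}, taken as the union over the two choices $s \in \{-1, 1\}$, with the set of all $\balpha \in [0,1]^n$ whose induced component $P = \sum_{i} \alpha_i \beta_i X_i$ satisfies the association threshold $\Asc(P, Z) \ge \epsilon$. Once this correspondence is established, the theorem is essentially a restatement of Definition~\ref{def:proxyuse}: the objective $\|A'\balpha\|^2$ equals $\Var(\Yhat)\cdot\Inf_{\Yhat}(P)$, a fixed positive multiple of the influence, so maximizing it over this region and comparing the optimum against $\delta$ is exactly a search for a component that is simultaneously associated and influential.

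First I would record the vector dictionary already sketched before the statement. For the component induced by $\balpha$ we have $\v p = A'\balpha$, whence $\Var(P) = \|\v p\|^2 = \|A'\balpha\|^2$ and, by Definition~\ref{def:influence}, $\Inf_{\Yhat}(P) = \|A'\balpha\|^2/\Var(\Yhat)$, while $\Cov(P, Z) = \v p \cdot \v z = \v z^T A'\balpha$. The box constraint $0 \preceq \balpha \preceq 1$ is, by Definition~\ref{def:component}, precisely the condition that $P$ is a legitimate component, and the objective is a positive constant times $\Inf_{\Yhat}(P)$.

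The crux is translating the association condition. By Equation~\ref{eqn:vectorassociation}, $\Asc(P, Z) \ge \epsilon$ is equivalent to $(\v z^T A'\balpha)^2 \ge \epsilon\,\|A'\balpha\|^2\,\|\v z\|^2$, i.e.\ $|\v z^T A'\balpha| \ge \sqrt{\epsilon}\,\|A'\balpha\|\,\|\v z\|$. Because the association measure squares the correlation, it ignores the sign of $\Cov(P,Z)$, so this absolute-value inequality splits into a disjunction over the two half-cones: it holds for some $s \in \{-1,1\}$ exactly when $\|A'\balpha\| \le s\cdot(\v z^T A'\balpha)/(\sqrt{\epsilon}\|\v z\|)$, with the matching sign being $s = \mathrm{sign}(\v z^T A'\balpha)$. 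Note that the nonnegativity of the left-hand side forces $s\cdot\v z^T A'\balpha \ge 0$, which selects the correct cone, and squaring recovers the threshold exactly when $\v p \neq 0$. I expect this sign bookkeeping — verifying that the two signed cone constraints together tile the full squared-correlation region $\cos^2\theta \ge \epsilon$ with neither gaps nor the need for additional cases — to be the main point requiring care, even though each individual implication is elementary.

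With the correspondence in hand the two directions close quickly. For the reverse direction, any optimal $\balpha$ of Problem~\ref{alg:optexact} with $\Inf_{\Yhat}(P) \ge \delta$ is in particular feasible, so it lies in $[0,1]^n$ (making $P$ a component) and satisfies the cone constraint (giving $\Asc(P,Z) \ge \epsilon$); thus $P$ witnesses proxy use. For the forward direction, a proxy $P$ is nonconstant since $\Inf_{\Yhat}(P) \ge \delta > 0$ forces $\Var(P) > 0$, so its $\Cov(P,Z) \neq 0$ and its $\balpha$ is feasible for $s = \mathrm{sign}(\v z^T A'\balpha)$ with influence at least $\delta$; because the feasible set is a closed subset of the compact cube $[0,1]^n$ and the objective is continuous, an optimum exists and its influence is at least that of $P$, hence at least $\delta$. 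Combining the two directions yields the stated equivalence.
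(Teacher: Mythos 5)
Your proposal is correct and follows essentially the same route as the paper's proof: identify the box constraint with component-hood, split the squared-association threshold $\Asc(P,Z)\ge\epsilon$ into the two signed cone constraints indexed by $s\in\{-1,1\}$, and then observe that since the objective maximizes influence, a proxy yields a feasible point whose optimum has influence at least $\delta$, and conversely any sufficiently influential solution is itself a proxy. Your additional care about the existence of an optimum (compactness of the feasible set) and the degenerate case $\v p = 0$ is sound but goes slightly beyond what the paper's own argument spells out.
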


\ifproof
\begin{proof}
	It is clear that $P$ is a component if and only if the constraint $0 \preceq \balpha \preceq 1$ is satisfied.
	We now show that the second constraint in Problem~\ref{alg:optexact} is the correct association constraint.
	
	Let $\v p = \alpha_1 \beta_1 \v x_1 + \cdots + \alpha_n \beta_n \v x_n = A' \balpha$ be the vector representation of $P$.
	From Equation~\ref{eqn:vectorassociation}, we know that $\Asc(P, Z) \ge \epsilon$ if and only if $(\v p \cdot \v z)^2 \ge \epsilon (\|\v p\| \|\v z\|)^2$.
	This inequality holds if and only if
	\[ \frac{\v p \cdot \v z}{\sqrt{\epsilon} \|\v z\|} \ge \|\v p\| \text{ or } -\frac{\v p \cdot \v z}{\sqrt{\epsilon} \|\v z\|} \ge \|\v p\|, \]
	where the former inequality represents positive correlation between $P$ and $Z$, and the latter negative correlation.
	Making the substitution $\v p = A' \balpha$, we see that $P$ is a component with strong enough association if and only if it meets the constraints in Problem~\ref{alg:optexact} with either $s = 1$ or $s = -1$.
	
	Therefore, if $P$ is a solution to the optimization problem and $\Inf_{\Yhat}(P) \ge \delta$, it is a component that exceeds both the association and influence thresholds, which means that it is a proxy.
	This proves the reverse direction of the theorem statement.
	
	To prove the forward direction, we consider the influence.
	We have $\Inf_{\Yhat}(P) \propto \Var(P) = \|\v p\|^2 = \|A'\balpha\|^2$, so minimizing $-\|A'\balpha\|^2$, as done in Problem~\ref{alg:optexact}, has the effect of maximizing the influence.
	
	Suppose that $P$ is a proxy, i.e., $P$ is a component such that $\Asc(P, Z) \ge \epsilon$ and $\Inf_{\Yhat}(P) \ge \delta$.
	Without loss of generality, $\Cov(P, Z) > 0$.
	Then, $P$ satisfies the constraints of Problem~\ref{alg:optexact} with $s = 1$, so the solution to the optimization problem must be at least as influential as $P$.
	The forward direction of the theorem statement follows from the assumption that $P$ has influence at least $\delta$.
\end{proof}
\fi

In essence, Theorem~\ref{thm:optexact} guarantees the correctness of the following proxy detection algorithm:
Run Problem~\ref{alg:optexact} with $s = 1$ and $s = -1$, and compute the association and influence of the resulting solutions.
The model contains a proxy if and only if any of the solutions passes both the association and the influence thresholds.

It is worth mentioning that Problem~\ref{alg:optexact} tests for strong positive correlation with $Z$ when $s = 1$ and for strong negative correlation when $s = -1$.
This optimization problem resembles a second-order cone program (SOCP)~\cite[Section~4.4.2]{boyd2004convex}, which can be solved efficiently.
However, the objective function is concave, so the standard techniques for solving SOCPs do not work on this problem.
To get around this issue, we can instead solve Problem~\ref{alg:optapprox}, which has a linear objective function whose coefficients $c_i = \|\beta_i \v x_i\|$ were chosen so that the inequality $\|A' \balpha\| \le \v c^T \balpha$ always holds.
This inequality allows us to prove Theorem~\ref{thm:optapprox}, which mirrors the claim of Theorem~\ref{thm:optexact} but only in one direction.

\begin{theorem} \label{thm:optapprox}
	If the linear regression model $\Yhat = \beta_1 X_1 + \cdots + \beta_n X_n$ contains a proxy, then there exists a solution to Problem~\ref{alg:optapprox} with $s \in \{-1, 1\}$ such that $\v c^T \balpha \ge (\delta \, \Var(\Yhat))^{0.5}$.
\end{theorem}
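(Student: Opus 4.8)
The plan is to exploit the fact that Problem~\ref{alg:optapprox} shares exactly the same feasible region as Problem~\ref{alg:optexact} and differs only in having a linear objective, then use the defining property of $\v c$ to turn the influence bound on a proxy into the claimed lower bound on $\v c^T \balpha$.

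First I would establish the key inequality $\|A' \balpha\| \le \v c^T \balpha$ for every feasible $\balpha$. Writing $A' \balpha = \sum_{i=1}^n \alpha_i \beta_i \v x_i$ and applying the triangle inequality gives $\|A' \balpha\| \le \sum_{i=1}^n |\alpha_i|\,\|\beta_i \v x_i\| = \sum_{i=1}^n \alpha_i c_i = \v c^T \balpha$, where the middle equality uses $\alpha_i \in [0,1]$ (so $|\alpha_i| = \alpha_i$) together with the choice $c_i = \|\beta_i \v x_i\|$. This is precisely why the coefficients $c_i$ were defined as stated.

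Next, assuming the model contains a proxy, Definition~\ref{def:proxyuse} supplies a component $P$ with $\Asc(P, Z) \ge \epsilon$ and $\Inf_{\Yhat}(P) \ge \delta$. Since $\Asc(P, Z) \ge \epsilon > 0$ forces $\Cov(P, Z) \ne 0$, I can fix $s \in \{-1, 1\}$ to match the sign of the correlation; by the argument in the proof of Theorem~\ref{thm:optexact}, the alpha-coefficients $\balpha$ defining $P$ then satisfy both constraints of Problem~\ref{alg:optapprox}, so $\balpha$ is feasible. The influence condition gives $\|A' \balpha\|^2 = \Var(P) = \Inf_{\Yhat}(P) \cdot \Var(\Yhat) \ge \delta\,\Var(\Yhat)$, hence $\|A' \balpha\| \ge (\delta\,\Var(\Yhat))^{0.5}$.

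Chaining the two inequalities yields $\v c^T \balpha \ge \|A' \balpha\| \ge (\delta\,\Var(\Yhat))^{0.5}$ at this feasible point. Finally, since Problem~\ref{alg:optapprox} \emph{maximizes} $\v c^T \balpha$ (it minimizes ${-}\v c^T \balpha$), its optimal solution attains an objective value no smaller than that of any feasible $\balpha$, and in particular satisfies $\v c^T \balpha \ge (\delta\,\Var(\Yhat))^{0.5}$. I do not expect a genuine obstacle here: the argument is essentially the forward direction of Theorem~\ref{thm:optexact} with the norm relaxed to its linear upper bound $\v c^T \balpha$. The only point requiring care is the direction of the relaxation—because $\v c^T \balpha$ over-estimates $\|A' \balpha\|$, a lower bound on the norm transfers to a lower bound on $\v c^T \balpha$, which is exactly what this one-directional claim requires (and is why the reverse implication of Theorem~\ref{thm:optexact} fails for the approximate problem).
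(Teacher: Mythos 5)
Your proposal is correct and follows essentially the same route as the paper's proof: invoke the feasibility argument from the forward direction of Theorem~\ref{thm:optexact}, convert the influence bound into $\|A' \balpha\| \ge (\delta\,\Var(\Yhat))^{0.5}$, and relax the norm to $\v c^T \balpha$ via the triangle inequality using $c_i = \|\beta_i \v x_i\|$ and $\alpha_i \ge 0$. Your closing remark that the optimizer's objective value dominates that of any feasible point is a step the paper leaves implicit, but it is the same argument.
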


\ifproof
\begin{proof}
	Because the proof is very similar to the proof of the forward direction of Theorem~\ref{thm:optexact}, we only point out where the proofs differ.
	Let $P$ be a proxy of $\Yhat$, and let $\balpha$ be the corresponding vector of alpha-coefficients.
	We will show that $\v c^T \balpha \ge (\delta \, \Var(\Yhat))^{0.5}$.
	
	By the definition of proxy, we have $\Inf_{\Yhat}(P) = \Var(P)/\Var(\Yhat) \ge \delta$.
	Since $\Var(P) = \|\v p\|^2 = \|A' \balpha\|^2$, we can rewrite this as $\|A' \balpha\| \ge (\delta \, \Var(\Yhat))^{0.5}$.
	Finally, the desired inequality follows from the observation that $\|A' \balpha\| = \|\sum_{i=1}^n \alpha_i \beta_i \v x_i\| \le \sum_{i=1}^n \alpha_i \|\beta_i \v x_i\| = \v c^T \balpha$ by the triangle inequality.
\end{proof}
\fi

Theorem~\ref{thm:optapprox} suggests a quick algorithm to verify that a model does not have any proxies.
We solve the SOCP described by Problem~\ref{alg:optapprox}, once with $s = 1$ and once with $s = -1$.
If neither solution satisfies $\v c^T \balpha \ge (\delta \, \Var(\Yhat))^{0.5}$, by the contrapositive of Theorem~\ref{thm:optapprox}, we can be sure that the model does not contain any proxies.

However, the converse does not hold, i.e., we cannot be sure that the model has a proxy even if a solution to Problem~\ref{alg:optapprox} satisfies $\v c^T \balpha \ge (\delta \, \Var(\Yhat))^{0.5}$.
This is because $\v c^T \balpha$ overapproximates $\|A' \balpha\|$ by using the triangle inequality.
As a result, it is possible for the influence to be below the threshold even if the value of $\v c^T \balpha$ is above the threshold.
While there is in general no upper bound on the overapproximation factor of the triangle inequality, the experiments in Section~\ref{sec:evaluation} show that this factor is not too large in practice. 
In addition, Problem~\ref{alg:optexact} often works well enough in practice despite not being a convex optimization problem.


\section{Exempt Use of a Variable} \label{sec:exemption}
So far, we have shown how to find a proxy in a linear regression model, but we have not discussed which proxies should be allowed and which should not.
As mentioned in Section~\ref{sec:introduction}, disparate impact is legally permitted if there is sufficient justification.
For example, in the context of predictive policing, it may be acceptable to consider the number of prior convictions even if one racial group tends to have a higher number of convictions than another.
We formalize this idea by assuming that the use of one particular input variable, which we call the \emph{exempt variable}, is explicitly permitted.
This assumption may be appropriate if, for example, the exempt variable is directly and causally related to the response variable $Y$.
Throughout this section, we will use $X_1$ to denote the exempt variable.

First, we formally define which proxies are \emph{exempt}, i.e., permitted because the proxy use is attributable to $X_1$.
Clearly, if the model ignores every input except $X_1$, all proxies in the model should be exempt.
Conversely, if the coefficient $\beta_1$ of $X_1$ is zero, no proxies should be exempt.
We capture this intuition by ignoring $X_1$ and checking whether the resulting component is a proxy.
More formally, if $P = \alpha_1 \beta_1 X_1 + \cdots + \alpha_n \beta_n X_n$ is a component, we investigate $P \setminus X_1$, which we write as shorthand for the component $\alpha_2 \beta_2 X_2 + \cdots + \alpha_n \beta_n X_n$.
If $P$ is a proxy but $P \setminus X_1$ is not, then $P$ is exempted because the proxy use can be attributed to the exempt variable $X_1$.

However, one possible issue with this attribution is that the other input variables can interact with $X_1$ to create a proxy stronger than $X_1$.
For example, suppose that $\Asc(X_2, Z) = 0$ and $P = X_1 + X_2 = Z$.
Then, even though $P \setminus X_1 = X_2$ is not a proxy, it makes $P$ more strongly associated with $Z$ than $X_1$ is, so it is not clear that $P$ should be exempt on account of the fact that we are permitted to use $X_1$.
Therefore, our definition of proxy exemption in Definition~\ref{def:exemption} adds the requirement that $P$ should not be too much more associated with $Z$ than $X_1$ is.

\begin{definition}[Proxy Exemption] \label{def:exemption}
	Let $P$ be a proxy component of a linear regression model, and let $X_1$ be the exempt variable. $P$ is an \emph{exempt} proxy if $P \setminus X_1$ is not a proxy and $\Asc(P, Z) < \Asc(X_1, Z) + \epsilon'$, where $\epsilon'$ is the association tolerance parameter.
\end{definition}

We can incorporate the exemption policy into our search algorithm with small changes to the optimization problem.
By Definition~\ref{def:exemption}, a proxy $P$ is nonexempt if either $P \setminus X_1$ is a proxy or $\Asc(P, Z) \ge \Asc(X_1, Z) + \epsilon'$.
For each of these two conditions, we modify the optimization problems from Section~\ref{sec:findproxyuse} to find proxies that also satisfy the condition.
If either of these modifications return a positive result, then we have found a nonexempt proxy.

We start with the second condition, for which it is easy to see that it suffices to change the association threshold in Problem~\ref{alg:optapprox} from $\epsilon$ to $\max(\epsilon, \Asc(X_1, Z) + \epsilon')$.
For the first condition, we use the result from Theorem~\ref{thm:exemption} and simply add the constraint that $\alpha_1 = 0$.
If we add this constraint to Problem~\ref{alg:optapprox}, the resulting problem is still an SOCP and can therefore be solved efficiently.

\begin{theorem} \label{thm:exemption}
	A linear regression model contains a proxy $P$ such that $P \setminus X_1$ is also a proxy if and only if the model contains a proxy such that $\alpha_1 = 0$.
\end{theorem}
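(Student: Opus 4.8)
The plan is to observe that both directions of the theorem are essentially definitional, because the syntactic operation $P \mapsto P \setminus X_1$ coincides exactly with the algebraic operation of setting the coefficient $\alpha_1$ to zero. Concretely, if $P = \alpha_1 \beta_1 X_1 + \cdots + \alpha_n \beta_n X_n$ is a component, then $P \setminus X_1 = \alpha_2 \beta_2 X_2 + \cdots + \alpha_n \beta_n X_n$ is itself a component in the sense of Definition~\ref{def:component}: its alpha-coefficients are $(0, \alpha_2, \ldots, \alpha_n)$, all of which lie in $[0,1]$, and in particular its first coefficient is $\alpha_1 = 0$. This single observation drives both implications, and no recomputation of $\Asc$ or $\Inf$ is needed—the proxy conditions are simply carried over from the hypotheses.

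For the forward direction, suppose the model contains a proxy $P$ such that $P \setminus X_1$ is also a proxy. I would take the witness to be $P \setminus X_1$ itself: by the observation above it is a component with $\alpha_1 = 0$, and it is a proxy by hypothesis, so the model contains a proxy with $\alpha_1 = 0$, as required.

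For the backward direction, suppose the model contains a proxy $Q$ with $\alpha_1 = 0$. Since the $X_1$ term of $Q$ already has coefficient zero, deleting it changes nothing, i.e.\ $Q \setminus X_1 = Q$. Hence setting $P = Q$ yields a proxy $P$ for which $P \setminus X_1 = Q$ is also a proxy, establishing the claim.

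The main point is that there is no substantive obstacle: the theorem is a bookkeeping identity relating the notation $\setminus X_1$ introduced before Definition~\ref{def:exemption} to the constraint $\alpha_1 = 0$ that is later added to Problem~\ref{alg:optapprox}. The only step requiring (minor) care is verifying that $P \setminus X_1$ genuinely qualifies as a component, i.e.\ that its coefficients remain in the admissible range $[0,1]$, so that the existential statements in Definition~\ref{def:proxyuse} are witnessed by a legitimate component on both sides of the equivalence.
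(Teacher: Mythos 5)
Your proof is correct and follows exactly the same route as the paper's: the forward direction takes $P \setminus X_1$ itself as the witness with $\alpha_1 = 0$, and the reverse direction observes that $\alpha_1 = 0$ implies $P \setminus X_1 = P$. Your additional check that $P \setminus X_1$ remains a legitimate component under Definition~\ref{def:component} is a minor (and harmless) elaboration that the paper leaves implicit.
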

\ifproof
\begin{proof}
	First, we prove the forward direction.
	Let $P$ be a proxy such that $P \setminus X_1$ is also a proxy.
	Then, $P \setminus X_1$ satisfies $\alpha_1 = 0$ by definition.
	
	To prove the reverse direction, we let $P$ be a proxy such that $\alpha_1 = 0$.
	Then, $P \setminus X_1 = P$, so $P$ and $P \setminus X_1$ are both proxies.
\end{proof}
\fi


\section{Experimental Results} \label{sec:evaluation}
In this section, we evaluate the performance of our algorithms on real-world predictive policing datasets.
We ran our proxy detection algorithms on observational data from Chicago's Strategic Subject List (SSL) model~\cite{chicago} and the Communities and Crimes (C\&C) dataset~\cite{uci}.
The creator of the SSL model claims that the model avoids variables that could lead to discrimination~\cite{asher2017inside}, and if this is the case then we would expect to see only weak proxies if any.
On the other hand, the C\&C dataset contains many variables that are correlated with race, so we would expect to find strong proxies in a model trained with this dataset.

To test these hypotheses, we implemented Problems~\ref{alg:optexact} and \ref{alg:optapprox} with the \texttt{cvxopt} package~\cite{cvxopt} in Python.
The experimental results confirm our hypotheses and show that our algorithm runs very quickly (\textless\ 1 second).
Moreover, our algorithms pinpoint components of the model that are the most problematic in terms of disparate impact, and we find that the exemption policy discussed in Section~\ref{sec:exemption} removes the appropriate proxies from the SSL model.

For each dataset, we briefly describe the dataset and present the experimental results, demonstrating how the identified proxies can provide evidence of discriminatory behavior in models.
Then, we explain the implications of these results on the false positive and false negative rates in practice, and we discuss how a practitioner can decide which values of $\epsilon$ and $\delta$ to use.

\begin{table}
	\centering
	\begin{tabular}{r|ccccccc}
		Association threshold $\epsilon$ & 0.01 & 0.02 & 0.03 & 0.04 & 0.05 & 0.06 & 0.07 \\ \hline
		Actual infl.\ (Prob.~\ref{alg:optexact}) & 0.8816 & 0.2263 & 0.1090 & 0.0427* & 0.0065* & 0.0028 & 0.0000 \\
		Approx.\ infl.\ (Prob.~\ref{alg:optapprox}) & 1.6933 & 0.6683 & 0.3820 & 0.1432 & 0.0270 & 0.0085 & 0.0000 \\
		Actual infl.\ (Prob.~\ref{alg:optapprox}) & 0.8476 & 0.1874 & 0.0987 & 0.0420 & 0.0080 & 0.0027 & 0.0000
	\end{tabular}
	\caption{
		Influence of the components obtained by solving the exact (Problem~\ref{alg:optexact}) and approximate (Problem~\ref{alg:optapprox}) optimization problems for the SSL model using $Z = \mathrm{race}$ and $s = 1$.
		No component had strong enough association when $s = -1$ instead.
		Asterisks indicate that the exact optimization problem terminated early due to a singular KKT matrix.
		The approximate optimization problem did not have this issue, and the overapproximation that it makes of the components' influence is shown in the second row.
	} \label{tbl:results}
\end{table}

\paragraph{Strategic Subject List.} 
The SSL~\cite{chicago} is a model that the Chicago Police Department uses to assess an individual's risk of being involved in a shooting incident, either as a victim or a perpetrator.
The SSL dataset consists of 398,684 rows, each of which corresponds to a person.
Each row includes the SSL model's eight input variables (including age, number of previous arrests for violent offenses, and whether the person is a member of a gang), the SSL score given by the model, and the person's race and gender.

We searched for proxies for race (binary black/white) and gender (binary male/female), filtering out rows with other race or gender.
After also filtering out rows with missing data, we were left with 290,085 rows.
Because we did not have direct access to the SSL model, we trained a linear regression model to predict the SSL score of a person given the same set of variables that the SSL model uses.
Our model explains approximately 80\% of the variance in the SSL scores, so we believe that it is a reasonable approximation of the true model for the purposes of this evaluation.

The strengths of the proxies for race are given in Table~\ref{tbl:results}.
The estimated influence was computed as $(\v c^T \balpha)^2 / \Var(\Yhat)$, which is the result of solving for $\delta$ in the inequality given in Theorem~\ref{thm:optapprox}.
We found that this estimate is generally about 3--4$\times$ larger than the actual influence.
Although the proxies for race were somewhat stronger than those for gender, neither type had significant influence ($\delta > \text{0.05}$) beyond small $\epsilon$ levels (\textapprox 0.03--0.04).
This is consistent with our hypothesis about the lack of discriminatory behavior in this model.

We also tested the effect of exempting the indicator variable for gang membership in the input.
Gang membership is more associated with both demographic variables than any other in among the inputs, and is a plausible cause of involvement in violent crimes~\cite{barnes2010estimating}, making it a prime candidate for exemption.
As contrasted with the components described in Table~\ref{tbl:results}, every nonexempt component under this policy has an association with race less than 0.033.
This means that the strongest nonexempt proxy is significantly weaker than the strongest general proxy, suggesting that much of the proxy use present in the model can be attributed to the gang membership variable.
	
\paragraph{Communities and Crimes.} C\&C~\cite{redmond2002data} is a dataset in the UCI machine learning repository~\cite{uci} that combines socioeconomic data from the 1990 US census with the 1995 FBI Uniform Crime Reporting data.
It consists of 1,994 rows, each of which corresponds to a community (e.g., municipality) in the U.S., and 122 potential input variables.
After we removed the variables that directly measure race and the ones with missing data, we were left with 90 input variables.

We simulated a hypothetical naive attempt at predictive policing by using this dataset to train a linear regression model that predicts the per capita rate of violent crimes in a community.
We defined the protected attribute $Z$ as the difference between the percentages of people in the community who are black and white, respectively.
We observed a strong association in the dataset between the rate of violent crime and $Z$ ($\Asc(Y, Z) = \text{0.48}$), and the model amplifies this bias even more ($\Asc(\Yhat, Z) = \text{0.65}$).

As expected, we found very strong proxies for race in the model trained with the C\&C dataset.
For example, one proxy consisting of 58 of the 90 input variables achieves an influence of 0.34 when $\epsilon = \text{0.85}$.
Notably, the input variable most strongly associated with race has an association of only 0.73, showing that \emph{in practice multiple variables combine to result in a stronger proxy than any of the individual variables.}
In addition, the model contains a proxy whose association is 0.40 and influence is 14.5.
In other words, the variance of the proxy is \emph{14.5 times greater than that of the model}; this arises because other associated variables cancel most of this variance in the full model.
As a result, exempting any one variable does not result in a significant difference since associated variables still yield proxies that are nearly as strong.
Moreover, a cursory analysis suggested that the variables used in these proxies are not justifiable correlates of race, so an exemption policy may not suffice to ``explain away'' the discriminatory behavior of the model.

\paragraph{False Positives and False Negatives.} Theorem~\ref{thm:optexact} shows that our exact proxy detection algorithm detects a proxy if and only if the model in fact contains a proxy.
In other words, if Problem~\ref{alg:optexact} returns optimal solutions, we can use the solutions to conclusively determine whether there exists a proxy, and there will be no false positives or false negatives.
However, our experiments show that sometimes Problem~\ref{alg:optexact} terminates early due to a singular KKT matrix, and in this case one can turn to the approximate proxy detection algorithm.

Although Problem~\ref{alg:optapprox} sometimes returns solutions that are not in fact proxies, we can easily ascertain whether any given solution is a proxy by simply computing its association and influence.
However, even if the solution returned by Problem~\ref{alg:optapprox} turn out to not be proxies, the model could still contain a different proxy.
Using Table~\ref{tbl:results} as reference, we see that this happens in the SSL model if, for example, $\epsilon = \text{0.02}$ and $\delta$ is between 0.1874 and 0.2263.
Therefore, one can consider the approximate algorithm as giving a finding of either ``potential proxy use'' or ``no proxy use''.
Theorem~\ref{thm:optapprox} shows that a finding of ``no proxy use'' does indeed guarantee that the model is free of proxies.
In other words, the approximate algorithm has no false negatives.
However, the algorithm overapproximates influence, so the algorithm can give a finding of ``potential proxy use'' when there are no proxies, resulting in a false positive.
This happens when $\delta$ is between the maximum feasible influence (first row in Table~\ref{tbl:results}) and the maximum feasible overapproximation of influence (second row in Table~\ref{tbl:results}).

\paragraph{Reasonable Values of $\epsilon$ and $\delta$.} Although the appropriate values of $\epsilon$ and $\delta$ depend on the application, we remind the reader that association is the \emph{square} of the Pearson correlation coefficient.
This means that an association of 0.05 corresponds to a Pearson correlation coefficient of \textapprox 0.22, which represents not an insignificant amount of correlation.
Likewise, influence is proportional to variance, which increases quadratically with scalar coefficients.
Therefore, we recommend against setting $\epsilon$ and $\delta$ to a value much higher than 0.05.
To get an idea of which values of $\delta$ are suitable for a particular application, the practitioner can compare the proposed value of $\delta$ against the influence of the individual input variables $\beta_i X_i$.


\section{Conclusion and Future Work}
In this paper, we have formalized the notion of proxy discrimination in linear regression models and presented an efficient proxy detection algorithm.
We account for the case where the use of one variable is justified, and extending this result to multiple exempt variables is valuable future work that would enable better handling of models like C\&C that take many closely
related input variables.
Developing learning rules that account for proxy use, leading to models without proxies above specified thresholds, is also an intriguing direction with direct potential for impact on practical scenarios.

\subsubsection*{Acknowledgment}
The authors would like to thank the anonymous reviewers at NeurIPS 2018 for their thoughtful feedback.
This material is based upon work supported by the National Science Foundation under Grant No.~CNS-1704845.

\bibliographystyle{plain}
\bibliography{biblio}

\appendix

\section{Axiomatic Justification of the Influence Measure} \label{sec:justification}
In this section, we argue that variance is the natural metric for quantifying the influence of a component.
We take an axiomatic approach, proving that variance is the unique function, up to a multiplicative constant, that satisfies a few desirable properties of an influence measure.

While the influence measure was motivated by the need to characterize the behavior of some component $P$ of model $\Yhat$, it can quantify the behavior of general random variables as well.
Therefore, in this section we work in the general setting where $P$ need not be a component of $\Yhat$.
In particular, this means that we omit the subscript $\Yhat$ in the notation $\Inf_{\Yhat}(P)$.
We note that, if we restrict $P$ to be a component of $\Yhat$, sometimes (e.g., when $\Yhat$ takes in only one input) it is impossible to impose meaningful restrictions on the influence measure with natural axioms such as the ones we present here:
\begin{description}
	\item[Nonnegativity.] $\Inf(P) \ge 0$ for all $P$.
	\item[Nonconstant positivity.] $\Inf(P) = 0$ if and only if $P$ is a constant.
	\item[Additivity.] $\Inf(P_1 + P_2) = \Inf(P_1) + \Inf(P_2)$ for all $P_1$ and $P_2$.
\end{description}
Nonconstant positivity comes from the observation that any nonconstant component has nonzero influence on a linear regression model, and the other two axioms are properties that we consider reasonable for an influence measure.
Unfortunately, these axioms are inconsistent.
To see this, let $P$ be a nonconstant random variable.
Then, $\Inf(P) + \Inf(-P) = 0$ by nonconstant positivity, but by additivity, $\Inf(P) + \Inf(-P) = \Inf(P - P) = \Inf(0) = 0$.
The issue is that two random variables can cancel out, so we relax the additivity axiom so that it only applies when the two variables are independent:
\begin{description}
	\item[Independent additivity.] If $P_1$ and $P_2$ are independent, $\Inf(P_1 + P_2) = \Inf(P_1) + \Inf(P_2)$.
\end{description}
Because linear models cannot capture any nonlinear associations, two variables with zero covariance may as well be independent as far as the linear model is concerned.
This fact motivates our final version of the additivity axiom, shown below:
\begin{description}
	\item[Zero-covariance additivity.] If $\Cov(P_1, P_2) = 0$, $\Inf(P_1 + P_2) = \Inf(P_1) + \Inf(P_2)$.
\end{description}

Now we prove that the combination of nonnegativity, nonconstant positivity, and zero-covariance additivity require the use of variance as the influence measure.
We first argue that $\Inf(P) + \Inf(-P)$ must be proportional to the variance of $P$, and then we proceed to show that $\Inf(P)$ and $\Inf(-P)$ must in fact be equal.

\begin{lemma} \label{lem:equalsums}
	Suppose $\Var(P_1) = \Var(P_2)$.
	If $\Inf$ satisfies nonnegativity, nonconstant positivity, and zero-covariance additivity, then $\Inf(P_1) + \Inf(-P_1) = \Inf(P_2) + \Inf(-P_2)$.
\end{lemma}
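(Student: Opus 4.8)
The plan is to introduce the symmetrized quantity $g(P) := \Inf(P) + \Inf(-P)$ and show that it is completely determined by $\Var(P)$; the lemma is then exactly the assertion for two variables of equal variance. Two elementary consequences of the axioms drive everything. First, $g$ is \emph{sign-symmetric}, because $g(-P) = \Inf(-P) + \Inf(P) = g(P)$ directly from the definition. Second, $g$ is \emph{additive on uncorrelated pairs}: if $\Cov(P, Q) = 0$, then also $\Cov(-P, -Q) = \Cov(P,Q) = 0$, so applying zero-covariance additivity to both $(P, Q)$ and $(-P, -Q)$ and summing the two resulting identities yields $g(P + Q) = g(P) + g(Q)$.

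The crux is finding a way to invoke additivity to compare two variables of equal variance, and here lies the main obstacle: additivity cannot be applied to a variable and itself, since $\Cov(P, P) = \Var(P) \ne 0$. I would get around this with a sum/difference maneuver. Given $P_1, P_2$ with $\Var(P_1) = \Var(P_2)$, set $A = P_1/2$ and $B = P_2/2$, so that $\Var(A) = \Var(B)$ and hence $\Cov(A + B,\, A - B) = \Var(A) - \Var(B) = 0$. Thus $A + B$ and $A - B$ are uncorrelated, and additivity gives $g(2A) = g\big((A+B) + (A - B)\big) = g(A + B) + g(A - B)$. By the identical computation with the roles of $A$ and $B$ interchanged, $g(2B) = g(B + A) + g(B - A)$.

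Finally I would combine the two expressions. Since $A + B = B + A$ and, by sign-symmetry, $g(A - B) = g(-(A - B)) = g(B - A)$, the two right-hand sides coincide, so $g(2A) = g(2B)$. As $2A = P_1$ and $2B = P_2$, this is precisely $\Inf(P_1) + \Inf(-P_1) = \Inf(P_2) + \Inf(-P_2)$, which is the claim. The degenerate case $\Var(P_1) = \Var(P_2) = 0$ is subsumed, since then every random variable appearing above is constant and nonconstant positivity forces each term to vanish. I expect the only genuinely subtle point to be recognizing the sum/difference decorrelation identity as the right device; once it is in hand, sign-symmetry is exactly what makes the two difference terms cancel, and no scalar-homogeneity property of $\Inf$ is needed.
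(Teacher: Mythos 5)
Your proof is correct, but it takes a genuinely different route from the paper's. The paper introduces an auxiliary random variable $P$ that is \emph{independent} of $P_1$ and $P_2$ with $\Var(P) = \Var(P_1) = \Var(P_2)$, and applies zero-covariance additivity to the decomposition $2P = (P + P_i) + (P - P_i)$, splitting each term further via independence to get $\Inf(2P) = 2\,\Inf(P) + \Inf(P_i) + \Inf(-P_i)$ for $i = 1, 2$; comparing the two expressions for $\Inf(2P)$ gives the claim. You avoid the auxiliary variable entirely: by halving to $A = P_1/2$, $B = P_2/2$ you make the sum/difference trick work directly between the two given variables, since $\Cov(A+B, A-B) = \Var(A) - \Var(B) = 0$ holds \emph{regardless of the correlation between $A$ and $B$}, and then sign-symmetry of $g(P) = \Inf(P) + \Inf(-P)$ cancels the difference terms, yielding $g(P_1) = g(2A) = g(A+B) + g(A-B) = g(A+B) + g(B-A) = g(2B) = g(P_2)$. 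What each approach buys: the paper's independent-copy device implicitly requires that such a copy exists on (an enlargement of) the probability space, and the same device recurs in its proof of Lemma~\ref{lem:signinvariance}, so setting it up here is reusable; your argument is more self-contained --- it stays on the original probability space, needs no existence assumption, uses fewer invocations of the additivity axiom, and (as you note) never needs any scalar-homogeneity of $\Inf$, since the factor of $2$ is introduced and removed purely by renaming. Both proofs handle the degenerate constant case correctly.
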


\begin{proof}
	Let $P$ be an independent random variable such that $\Var(P) = \Var(P_1) = \Var(P_2)$. It is always possible to find such $P$; for example, $P$ could be drawn independently from the distribution that $P_1$ was drawn from.
	Because $\Cov(P + P_1, P - P_1) = \Var(P) - \Var(P_1) = 0$, by zero-covariance additivity we have
	\[ \Inf(2P) = \Inf(P + P_1) + \Inf(P - P_1) = \Inf(P) + \Inf(P_1) + \Inf(P) + \Inf(-P_1). \]
	Similarly, $\Inf(2P) = \Inf(P) + \Inf(P_2) + \Inf(P) + \Inf(-P_2)$, and our desired result follows from combining the above two equations.
\end{proof}

\begin{theorem} \label{thm:equalsums}
	Let $\Inf$ be an influence measure that satisfies nonnegativity, nonconstant positivity, and zero-covariance additivity.
	Then, $\Inf(P) + \Inf(-P) = c \, \Var(P)$, where $c$ can be any fixed positive constant.
\end{theorem}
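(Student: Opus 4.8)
The plan is to reduce the theorem to Cauchy's additive functional equation by packaging $\Inf(P) + \Inf(-P)$ into a single-variable function, and then to use nonnegativity as the regularity condition that forces linearity.

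First I would use Lemma~\ref{lem:equalsums} to observe that the quantity $\Inf(P) + \Inf(-P)$ depends only on $\Var(P)$, not on the distribution of $P$. This lets me define a function $f : [0,\infty) \to \mathbb{R}$ by setting $f(v) = \Inf(P) + \Inf(-P)$ for any $P$ with $\Var(P) = v$; the lemma guarantees this is well defined, and nonconstant positivity gives $f(0) = 0$ since a constant and its negation both have zero influence. The theorem then reduces to showing that $f(v) = c\,v$ for some fixed $c > 0$.

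Next I would establish that $f$ is additive. Given $v_1, v_2 \ge 0$, pick \emph{independent} random variables $P_1, P_2$ with $\Var(P_i) = v_i$. Independence implies $\Cov(P_1, P_2) = 0$ and likewise $\Cov(-P_1, -P_2) = 0$, so two applications of zero-covariance additivity yield $\Inf(P_1 + P_2) = \Inf(P_1) + \Inf(P_2)$ and $\Inf(-(P_1 + P_2)) = \Inf(-P_1) + \Inf(-P_2)$. Summing these two identities and using $\Var(P_1 + P_2) = v_1 + v_2$ gives $f(v_1 + v_2) = f(v_1) + f(v_2)$, i.e., $f$ satisfies Cauchy's functional equation on $[0,\infty)$.

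The main obstacle is that Cauchy's equation alone admits pathological nonlinear (nonmeasurable) solutions, so I need a regularity condition to pin $f$ down; this is exactly what nonnegativity provides. Since $\Inf \ge 0$, we have $f \ge 0$, and additivity then gives $f(v + h) = f(v) + f(h) \ge f(v)$ for every $h \ge 0$, so $f$ is monotone nondecreasing. A monotone additive function on $[0,\infty)$ is linear, hence $f(v) = c\,v$ with $c = f(1) \ge 0$. Finally, nonconstant positivity rules out the degenerate case $c = 0$: for any nonconstant $P$, both $P$ and $-P$ are nonconstant, so $\Inf(P) > 0$ and $\Inf(-P) > 0$, whence $f(\Var(P)) > 0$ and therefore $c > 0$. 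This yields $\Inf(P) + \Inf(-P) = c\,\Var(P)$ for a fixed positive constant $c$, as claimed.
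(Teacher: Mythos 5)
Your proposal is correct, and it follows the same overall route as the paper's proof: invoke Lemma~\ref{lem:equalsums} to see that $\Inf(P) + \Inf(-P)$ depends only on $\Var(P)$, define $f(\Var(P)) = \Inf(P) + \Inf(-P)$, and derive Cauchy's functional equation $f(v_1 + v_2) = f(v_1) + f(v_2)$ by applying zero-covariance additivity to independent random variables $P_1, P_2$ with the prescribed variances. Where you genuinely improve on the paper is the final step. The paper simply asserts that additivity of $f$ ``is only possible if $f$ is a linear function,'' which as stated is a gap: Cauchy's equation on $[0,\infty)$ admits pathological nonlinear solutions in the absence of a regularity hypothesis, so some additional property of $\Inf$ must be used. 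You supply exactly the missing ingredient: nonnegativity of $\Inf$ gives $f \ge 0$, additivity then yields $f(v+h) = f(v) + f(h) \ge f(v)$, so $f$ is monotone nondecreasing, and a monotone solution of Cauchy's equation is linear. Your handling of the constant is also careful and correct: $c = f(1) \ge 0$, and nonconstant positivity (applied to both $P$ and $-P$, which are simultaneously nonconstant) forces $f(\Var(P)) > 0$ for nonconstant $P$, hence $c > 0$. In short, your argument is the paper's argument done rigorously; the only substantive addition is the monotonicity observation, and it is precisely the observation the paper needed to make its linearity claim valid.
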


\begin{proof}
	By Lemma~\ref{lem:equalsums}, $\Inf(P) + \Inf(-P) = f(\Var(P))$ for some function $f$ with domain $[0, \infty)$.
	It remains to show that $f(x)$ must have the form $cx$ for some positive constant $c$.
	
	Let $P_1$ and $P_2$ be independent random variables with variances $v_1$ and $v_2$, respectively.
	Then, $\Var(P_1 + P_2) = v_1 + v_2$, and by zero-covariance additivity, we have
	\begin{multline*}
	f(v_1 + v_2) = \Inf(P_1 + P_2) + \Inf(-P_1 - P_2) \\
	= \Inf(P_1) + \Inf(-P_1) + \Inf(P_2) + \Inf(-P_2) = f(v_1) + f(v_2)
	\end{multline*}
	for all nonnegative $v_1$ and $v_2$.
	This is only possible if $f$ is a linear function of $x$.
	Therefore, we can write $f(x) = cx$ for some $c$, which must be positive due to nonconstant positivity.
\end{proof}

Having shown that $\Inf(P) + \Inf(-P)$ is a linear function of $\Var(P)$, we now we want to show that $\Inf(P) = \Inf(-P)$.
For brevity, we use $\gamma(P) = \frac{\Inf(P) + \Inf(-P)}{2}$ to denote what $\Inf(P)$ would be if it is indeed true that $\Inf(P) = \Inf(-P)$.
Now consider $\Inf(P)$ as the sum of two expressions $\gamma(P)$ and $\Inf(P) - \gamma(P)$.
Lemma~\ref{lem:signinvariance} shows that these two expressions have different growth rates in $P$.

\begin{lemma} \label{lem:signinvariance}
	Let $\gamma(P) = \frac{\Inf(P) + \Inf(-P)}{2}$.
	If $\Inf$ satisfies nonnegativity, nonconstant positivity, and zero-covariance additivity, then $\Inf(2^n P) = 2^n (\Inf(P) - \gamma(P)) + 4^n \gamma(P)$ for all integer $n$.
\end{lemma}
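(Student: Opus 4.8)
The plan is to split the influence into its symmetric and antisymmetric parts under the map $P \mapsto -P$. Writing $\gamma(P) = \tfrac{1}{2}(\Inf(P) + \Inf(-P))$ as in the statement, Theorem~\ref{thm:equalsums} already tells us that $\gamma(P) = \tfrac{c}{2}\Var(P)$, so the symmetric part scales \emph{quadratically}: $\gamma(tP) = t^2\gamma(P)$, and in particular $\gamma(2^n P) = 4^n\gamma(P)$. The content of the lemma is that the remaining antisymmetric part $\Inf(P) - \gamma(P)$ scales \emph{linearly}, giving $\Inf(2^n P) = 2^n(\Inf(P)-\gamma(P)) + 4^n\gamma(P)$. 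My strategy is therefore to establish a single ``doubling'' identity relating $\Inf(2P)$ to $\Inf(P)$ and $\gamma(P)$, and then to bootstrap it to all integer powers by induction.

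For the doubling identity I would reuse the independent-copy symmetrization from the proof of Lemma~\ref{lem:equalsums}. The main obstacle is that $2P = P + P$ cannot be split using zero-covariance additivity directly, since $\Cov(P,P) = \Var(P) \neq 0$. To get around this, let $P'$ be drawn independently from the same distribution as $P$, so that $\Var(P') = \Var(P)$. Then $\Cov(P+P', P-P') = \Var(P) - \Var(P') = 0$, and since $(P+P') + (P-P') = 2P$, zero-covariance additivity gives $\Inf(2P) = \Inf(P+P') + \Inf(P-P')$. Applying zero-covariance additivity once more to each summand (legitimate because $\Cov(P, \pm P') = 0$ by independence) yields $\Inf(2P) = 2\Inf(P) + \Inf(P') + \Inf(-P') = 2\Inf(P) + 2\gamma(P')$, and $\gamma(P') = \gamma(P)$ by Theorem~\ref{thm:equalsums} because $P'$ has the same variance as $P$. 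Thus $\Inf(2P) = 2\Inf(P) + 2\gamma(P)$, which is exactly the claimed formula at $n=1$ and, crucially, holds for \emph{every} random variable, not just $P$ itself.

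With the universal identity $\Inf(2Q) = 2\Inf(Q) + 2\gamma(Q)$ in hand, the induction is routine. For $n \ge 0$ I would set $Q = 2^n P$, use $\gamma(2^n P) = 4^n\gamma(P)$, and substitute the inductive hypothesis; a short calculation turns $2\bigl(2^n(\Inf(P)-\gamma(P)) + 4^n\gamma(P)\bigr) + 2\cdot 4^n\gamma(P)$ into $2^{n+1}(\Inf(P)-\gamma(P)) + 4^{n+1}\gamma(P)$, completing the step. Since the doubling identity can be solved for $\Inf(Q)$ in terms of $\Inf(2Q)$, the same recurrence runs downward and extends the formula to negative $n$, with the case $n=0$ holding as an identity. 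The only genuine subtlety is the one dispatched above---manufacturing a zero-covariance decomposition of $2P$ via an independent copy---together with the appeal to Theorem~\ref{thm:equalsums} that pins $\gamma$ to the variance, which is what makes both $\gamma(P') = \gamma(P)$ and $\gamma(2^n P) = 4^n\gamma(P)$ available.
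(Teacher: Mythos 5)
Your proposal is correct and follows essentially the same route as the paper: both hinge on the independent-copy symmetrization $2Q = (Q+Q') + (Q-Q')$ with $\Cov(Q+Q',Q-Q')=0$, the appeal to Theorem~\ref{thm:equalsums} to get $\gamma \propto \Var$ (hence $\gamma(2^nP)=4^n\gamma(P)$), induction for $n \ge 0$, and inverting the recurrence for negative $n$. Your only departure is organizational—you isolate the doubling identity $\Inf(2Q)=2\Inf(Q)+2\gamma(Q)$ as a standalone step before inducting, whereas the paper inlines that computation into the inductive step—which is a presentational difference, not a mathematical one.
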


\begin{proof}
	First, we prove the theorem for all nonnegative $n$ by induction on $n$.
	The base case where $n = 0$ is trivial.
	
	Let $P'$ be an independent random variable such that $\Var(P) = \Var(P')$.
	Then, $\Cov(P + P', P - P') = \Var(P) - \Var(P') = 0$, so by zero-covariance additivity,
	\begin{align*}
	\Inf(2^{n+1} P) &= \Inf(2^n(P + P')) + \Inf(2^n(P - P')) \\
	&= \Inf(2^n P) + \Inf(2^n P') + \Inf(2^n P) + \Inf(-2^n P') \\
	&= 2 \, \Inf(2^n P) + (\Inf(2^n P') + \Inf(-2^n P')).
	\end{align*}
	Using the inductive hypothesis, we can turn the right-hand side into
	\[ (2^{n+1} (\Inf(P) - \gamma(P)) + 2 \cdot 4^n \gamma(P)) + 2 \, \gamma(2^n P'). \]
	Finally, because $\gamma$ is proportional to variance, we can substitute in $\gamma(2^n P') = \gamma(2^n P) = 4^n \gamma(P)$ to get
	\[ 2^{n+1} (\Inf(P) - \gamma(P)) + 4^{n+1} \gamma(P), \]
	which is what we want.
	
	Now we consider the case where $n$ is negative. Then, because $-n$ is positive, we have $\Inf(2^{-n} P) = 2^{-n}(\Inf(P) - \gamma(P)) + 4^{-n}\gamma(P)$ for all $P$. Substituting $2^n P$ for $P$ and noting that $\gamma(2^n P) = 4^n \gamma(P)$, we get
	\[ \Inf(P) = 2^{-n} \Inf(2^n P) - 2^n \gamma(P) + \gamma(P), \]
	and a bit of algebraic manipulation gives us the desired result.
\end{proof}

Lemma~\ref{lem:signinvariance} says that $\gamma(P)$ scales quadratically with $P$, whereas $\Inf(P) - \gamma(P)$ is only linear in $P$.
Because of this discrepancy in scaling, if $\Inf(P)$ differs from $\gamma(P)$, we can repeatedly halve $P$ to end up with a random variable whose influence is negative.
But this contradicts the nonnegativity axiom, so we must have $\Inf(P) = \gamma(P)$ for all $P$.
This informal argument is formalized below.

\begin{theorem} \label{thm:signinvariance}
	Let $\Inf$ be an influence measure that satisfies nonnegativity, nonconstant positivity, and zero-covariance additivity. Then, $\Inf(P) = \gamma(P)$ for all $P$.
\end{theorem}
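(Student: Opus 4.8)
The plan is to combine the scaling identity of Lemma~\ref{lem:signinvariance} with the nonnegativity axiom, exploiting the fact that the two summands on its right-hand side grow at different rates. Fix an arbitrary $P$. If $P$ is constant, then $\Inf(P) = \Inf(-P) = 0 = \gamma(P)$ by nonconstant positivity and we are done, so assume $P$ is nonconstant. Write $d = \Inf(P) - \gamma(P)$ and $g = \gamma(P)$, and note that $g \ge 0$, since $2\gamma(P) = \Inf(P) + \Inf(-P) = c\,\Var(P) \ge 0$ by Theorem~\ref{thm:equalsums}. In this notation Lemma~\ref{lem:signinvariance} reads $\Inf(2^n P) = 2^n d + 4^n g$ for every integer $n$.

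First I would show $d \ge 0$. By nonnegativity, $2^n d + 4^n g \ge 0$ for all integers $n$; dividing through by the positive quantity $2^n$ gives $d + 2^n g \ge 0$, i.e.\ $d \ge -2^n g$. Letting $n \to -\infty$ sends $2^n g \to 0$ (this is where $g \ge 0$ is used), so $d \ge 0$. This is precisely the informal ``repeatedly halve $P$'' argument: for very negative $n$ the quadratic term $4^n g$ is negligible against the linear term $2^n d$, so a negative $d$ would force $\Inf(2^n P) < 0$, contradicting nonnegativity.

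To obtain the matching inequality $d \le 0$, I would run the identical argument on $-P$. The key observation is that $\gamma$ is sign-symmetric, $\gamma(-P) = \gamma(P) = g$, while $\Inf(-P) - \gamma(-P) = (2g - \Inf(P)) - g = -d$, using $\Inf(P) + \Inf(-P) = 2\gamma(P)$. Thus applying the previous paragraph to $-P$ yields $-d \ge 0$. Combining the two inequalities gives $d = 0$, i.e.\ $\Inf(P) = \gamma(P)$, as desired.

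The step I expect to be the crux is recognizing that a single application of the nonnegativity bound only pins down the sign of $d$ in one direction, and that the clean way to close the gap is to re-apply it to $-P$ (equivalently, to observe that the roles of $\Inf(P)$ and $\Inf(-P)$ are interchangeable under the symmetry of $\gamma$). Everything else is a routine limit as $n \to -\infty$; the only mild care needed is confirming $g \ge 0$, so that the quadratic term vanishes in the limit rather than blowing up.
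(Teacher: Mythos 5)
Your proof is correct and takes essentially the same approach as the paper's: it combines Lemma~\ref{lem:signinvariance} with nonnegativity applied to $2^n P$ for very negative $n$, and uses the $P \mapsto -P$ symmetry of $\gamma$ to get the second inequality, which is exactly the paper's without-loss-of-generality step. The only differences are presentational --- the paper argues by contradiction with an explicit choice of $n$ satisfying $2^n < (\gamma(P) - \Inf(P))/\gamma(P)$, while you take a limit as $n \to -\infty$ and handle the constant case separately.
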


\begin{proof}
	Suppose for contradiction that $\Inf(P) \neq \gamma(P)$.
	If $\Inf(P) > \gamma(P)$, then $\Inf(-P) < \gamma(-P)$, so we can assume without loss of generality that $\gamma(P) - \Inf(P) > 0$.
	Choose a small enough $n$ such that $2^n < \frac{\gamma(P) - \Inf(P)}{\gamma(P)}$.
	Then, by Lemma~\ref{lem:signinvariance},
	\[ \Inf(2^n P) = 2^n (\Inf(P) - \gamma(P)) + 4^n \gamma(P) = 2^n \gamma(P) \left(2^n - \frac{\gamma(P) - \Inf(P)}{\gamma(P)}\right) < 0, \]
	which contradicts the nonnegativity axiom.
\end{proof}

Finally, we can unpack the definition of $\gamma(P)$ to arrive at the final result of this section, which states that the influence measure must be proportional to variance.
Our definition of influence simply chooses the constant of proportionality that gives unit influence to the whole model.

\begin{theorem} \label{thm:influence}
	If $\Inf$ satisfies nonnegativity, nonconstant positivity, and zero-covariance additivity, then $\Inf(P) \propto \Var(P)$.
\end{theorem}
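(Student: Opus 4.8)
The plan is to obtain this theorem as an immediate corollary of the two results that precede it, namely Theorem~\ref{thm:equalsums} and Theorem~\ref{thm:signinvariance}, which together have already done all of the substantive work. The strategy is simply to unpack the definition of $\gamma$ and chain the two theorems.

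First I would recall that $\gamma(P) = \frac{\Inf(P) + \Inf(-P)}{2}$ by definition. Theorem~\ref{thm:signinvariance} tells us that, under the three axioms, $\Inf(P) = \gamma(P)$ for all $P$. Substituting the definition of $\gamma$ then yields
\[ \Inf(P) = \frac{\Inf(P) + \Inf(-P)}{2}. \]
Next I would invoke Theorem~\ref{thm:equalsums}, which gives $\Inf(P) + \Inf(-P) = c\,\Var(P)$ for some fixed positive constant $c$. Combining the two displayed facts gives $\Inf(P) = \frac{c}{2}\,\Var(P)$, and since $\frac{c}{2}$ is a fixed positive constant this establishes $\Inf(P) \propto \Var(P)$, as desired.

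I do not expect any genuine obstacle here: all of the difficulty has been front-loaded into Lemma~\ref{lem:equalsums}, Lemma~\ref{lem:signinvariance}, and the two theorems they feed, where the delicate scaling argument (quadratic growth of $\gamma$ versus linear growth of $\Inf - \gamma$, forcing their difference to vanish by nonnegativity) actually lives. The only thing to be careful about is bookkeeping of the constant: Theorem~\ref{thm:equalsums} already notes that $c$ may be taken to be any fixed positive constant, so reporting the proportionality constant as $c/2$ rather than $c$ is harmless and the proportionality claim is unaffected.
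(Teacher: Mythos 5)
Your proof is correct and takes essentially the same route as the paper: the paper derives Theorem~\ref{thm:influence} precisely by combining Theorem~\ref{thm:signinvariance} ($\Inf(P) = \gamma(P)$) with Theorem~\ref{thm:equalsums} ($\Inf(P) + \Inf(-P) = c\,\Var(P)$), exactly as you do. Your bookkeeping of the constant ($c/2$ rather than $c$) is harmless, since only proportionality is claimed.
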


\end{document}